\newcommand{\mytitle}{Planning with Learned Object Importance in Large Problem Instances using Graph Neural Networks} % also update below line
\newcommand{\mytitlewithbreaks}{Planning with Learned Object Importance in Large Problem Instances\\using Graph Neural Networks} % also update above line
\title{\mytitlewithbreaks}
\author{Tom Silver$^*$, Rohan Chitnis$^*$, Aidan Curtis,\\ Joshua Tenenbaum, Tom\'{a}s Lozano-P\'{e}rez, Leslie Pack Kaelbling\\\textnormal{MIT Computer Science and Artificial Intelligence Laboratory}\\\texttt{\textnormal{\{tslvr, ronuchit, curtisa, jbt, tlp, lpk\}@mit.edu}}}
\let\oldnl\nl% Store \nl in \oldnl
\newcommand{\nonl}{\renewcommand{\nl}{\let\nl\oldnl}}% Remove line number for one line
\newenvironment{tightlist}%
{\begin{list}{$\bullet$}{%
    \setlength{\topsep}{0in}
    \setlength{\partopsep}{0in}
    \setlength{\itemsep}{0in}
    \setlength{\parsep}{0in}
    \setlength{\leftmargin}{1.5em}
    \setlength{\rightmargin}{0in}
}
}%
{\end{list}
}
\newcommand{\secref}[1]{Section \ref{#1}}
\newcommand{\figref}[1]{Figure~\ref{#1}}
\newcommand{\algref}[1]{Algorithm~\ref{#1}}
\newcommand{\tabref}[1]{Table~\ref{#1}}
\newcommand{\defref}[1]{Definition~\ref{#1}}
\newcommand{\appref}[1]{Appendix~\ref{#1}}
\newtheorem{lem}{Lemma}
\newtheorem{defn}{Definition}
\def\thickhline{%
  \noalign{\ifnum0=`}\fi\hrule \@height \thickarrayrulewidth \futurelet
   \reserved@a\@xthickhline}
\def\@xthickhline{\ifx\reserved@a\thickhline
               \vskip\doublerulesep
               \vskip-\thickarrayrulewidth
             \fi
      \ifnum0=`{\fi}}
\newlength{\thickarrayrulewidth}
\newcommand{\G}{\mathcal{G}}
\renewcommand{\P}{\mathcal{P}}
\renewcommand{\O}{\mathcal{O}}
\renewcommand{\S}{\mathcal{S}}
\newcommand{\A}{\mathcal{A}}
\newcommand{\V}{\mathcal{V}}
\newcommand{\E}{\mathcal{E}}
\newcommand{\plan}{{\sc plan}}
\newcommand{\ploi}{{\sc ploi}}
\DeclarePairedDelimiterX{\infdivx}[2]{(}{)}{%
  #1\;\delimsize\|\;#2%
}
\begin{document}
%\linenumbers
\maketitle

\begin{abstract}
Real-world planning problems often involve hundreds or even thousands of objects, straining the limits of modern planners.
In this work, we address this challenge by learning to predict a small set of objects that, taken together, would be sufficient for finding a plan.
We propose a graph neural network architecture for predicting object importance in a single inference pass, thus incurring little overhead while greatly reducing the number of objects that must be considered by the planner. Our approach treats the planner and transition model as black boxes, and can be used with any off-the-shelf planner. Empirically, across classical planning, probabilistic planning, and robotic task and motion planning, we find that our method results in planning that is significantly faster than several baselines, including other partial grounding strategies and lifted planners. We conclude that learning to predict a sufficient set of objects for a planning problem is a simple, powerful, and general mechanism for planning in large instances. Video: \url{https://youtu.be/FWsVJc2fvCE} Code: \url{https://git.io/JIsqX}
\end{abstract}

\section{Introduction}
\label{sec:intro}

A key research agenda in classical planning is to extend the core framework to large-scale real-world applications. 
Such applications will often involve many objects, only some of which are important for any particular goal. 
For example, a household robot's internal state must include all objects relevant to \emph{any} of its functions, but once it receives a \emph{specific} goal, such as boiling potatoes, it should restrict its attention to only a small object set, such as the potatoes, pots, and forks, ignoring the hundreds or even thousands of other objects.
If its goal were instead to clean the sink, the set of objects to consider would vary drastically.

\begin{figure}[t]
  \centering
    \noindent
    \includegraphics[width=\columnwidth]{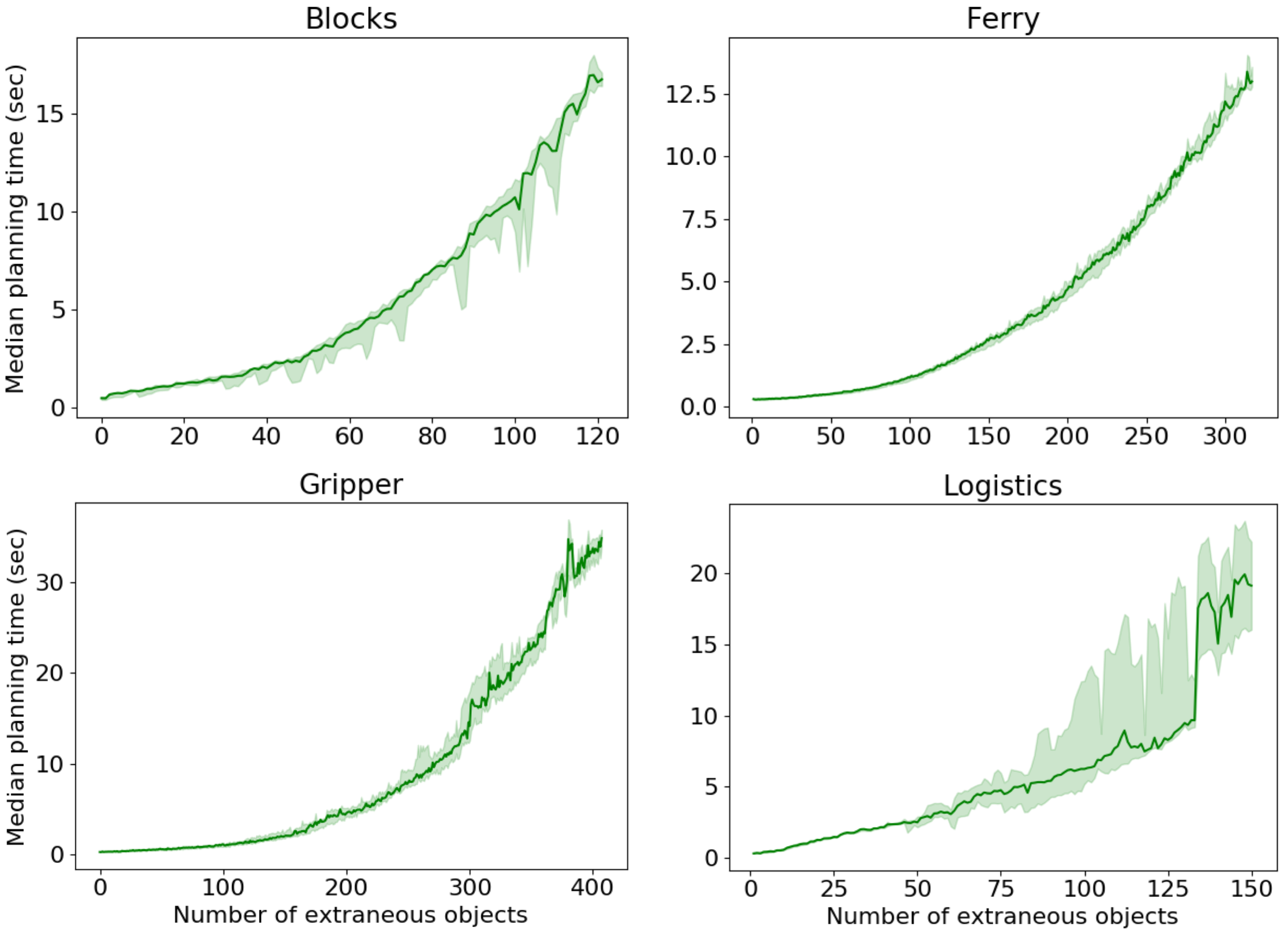}
    \caption{Time taken by Fast Downward~\cite{fd} in the \texttt{LAMA-first} mode on various IPC domains, as a function of the number of extraneous objects in the problem. The $x$-axis is the number of objects added to a small sufficient set (\defref{def:sufficiency}). Curves show a median across 10 problems; shaded regions show the 25th to 75th percentiles. We can see that planning time gets substantially worse as the number of extraneous objects increases; real-world applications of planning will often contain large numbers of such objects for a particular goal. In this work, we learn to predict a small subset of objects that is sufficient for planning, leading to significantly faster planning than both Fast Downward on its own and other learning-based grounding methods.}
  \label{fig:teaser}
\end{figure}

More generally, we consider planning problems with large (but finite) universes of objects, where only a small subset need to be considered for any particular goal.
Popular heuristic search planners~\cite{ff,fd,iw} scale poorly in this regime (\figref{fig:teaser}), as they ground actions over the objects during preprocessing.
Lifted planners~\cite{ridder2014lifted,correa2020lifted} avoid explicit grounding, but struggle during search; we find that a state-of-the-art lifted planner~\cite{correa2020lifted} fails to solve any test problem in our experiments within the timeout (but it can usually solve the much smaller training problems).
In this many-object setting, one would instead like to identify a small sufficient set of objects \emph{before} planning, but finding this set is nontrivial and highly problem-dependent. 

In this work, we propose to \emph{learn to predict} subsets of objects that are sufficient for solving planning problems.
This requires reasoning about discrete and continuous properties of the objects and their relations. Generalizing to problems with more objects requires learning lifted models that are agnostic to object identity and count.
We therefore propose a convolutional graph neural network architecture \cite{scarselli2008graph,kipf2016semi,battaglia2018relational} that learns from a modest number $(< 50)$ of small training problems and generalizes to hard test problems with many more objects.
On the test problems, we use the network to predict a sufficient object set, remove all facts from the initial state and goal referencing excluded objects, and call an off-the-shelf planner on this reduced planning problem.
For completeness, we wrap this procedure in an incremental loop that considers more objects until a solution is found.

Object importance prediction offers several advantages over alternative learning-based approaches: 
(1) it can treat the planner and transition model as black boxes; 
(2) its runtime does not depend on the number of ground actions (for a constant number of objects);  
(3) it permits efficient inference, therefore contributing negligibly to the overall planning time; and 
(4) it allows for a large margin of error in one direction, since the planning time can improve substantially even if only some irrelevant objects are excluded (\figref{fig:teaser}).

Gathering training data can be challenging in this setting because it requires labels of which objects are relevant; it would be impractical to assume that such labels are given. Instead, we propose a greedy approximate procedure for generating these labels automatically, which is only conducted in the relatively small training problems.

In experiments, we consider classical planning, probabilistic planning, and robotic task and motion planning, with test problems containing hundreds or thousands of objects.
Our method, \textbf{P}lanning with \textbf{L}earned \textbf{O}bject \textbf{I}mportance (\ploi{}), results in planning that is much more efficient than several baselines, including policy learning~\cite{groshev2017learning,rivlin2020generalized} and partial action grounding~\cite{gnad2019learning}.
We conclude that object importance prediction is a simple, powerful, and general mechanism for planning in large instances with many objects.
\section{Related Work}
\label{sec:related}

\subsubsection{Planning with Many Objects.}

Planning for problem instances that contain many objects is one of the main motivations for ongoing research in \emph{lifted planning} \cite{ridder2014lifted,correa2020lifted}.
In STRIPS-like domains, lifted planners avoid the expensive preprocessing step of grounding the actions over all objects.
Another way to alleviate the burden of grounding is to simplify the planning problem by creating \emph{abstractions} \cite{dearden1997abstraction,dietterich2000state,gardiolthesis,abel2017near}.
Our object importance predictor can also be viewed as a type of learned abstraction selection \cite{konidarisabstractions,riddle2016improving,haslum2007reducing}.
%Finally, planners that handle infinite spaces, such as \emph{task and motion planners} (TAMP), \cite{tamphpn,tampinterface,tamplgp,stripstream}, may be viewed as addressing problem instances containing many objects, if points in the infinite space are ``objects.'' While this analogy inspired our work, it is ultimately superficial; the geometric smoothness assumptions leveraged in TAMP do not transfer to the problem of selecting among many discrete objects.
    
\subsubsection{Relational Representations for Learning to Plan.} 

Our work uses graph neural networks (GNNs)~\cite{scarselli2008graph,kipf2016semi,battaglia2018relational}, an increasingly popular choice for relational machine learning with applications to planning \cite{wu2020comprehensive,ma2020online,shen2020learning,rivlin2020generalized}.
One advantage of GNNs over logical representations \cite{ilp1,ilp2,dvzeroski2001relational} is that GNNs natively support continuous object-level and relational properties.
We make use of this flexibility in our experiments, showing results in a simulated robotic environment.

\subsubsection{Generalized Planning.}
Our work may be seen as an instance of \emph{generalized planning}, which broadly encompasses methods for collectively solving a set of planning problems, rather than a single problem in isolation \cite{jimenez2019review}.
Other approaches to generalized planning include generalized policy learning \cite{triangletables,groshev2017learning,gomoluch2019learning}, 
incremental search \cite{koenig2004incremental,pommerening2013incremental}, and
heuristic or value function learning \cite{yoon2008learning,arfaee2011learning,silver2016mastering,shen2020learning}.
Incremental search and heuristic learning are complementary to our work and could be easily combined; generalized policy learning suggests a different mode of execution (executing the policy without planning) and we therefore include it as a baseline in our experiments.

The work perhaps most similar to ours is that of \citet{gnad2019learning}, who propose partial action grounding as another approach to generalized planning in large problems.
Rather than predicting the probability that \emph{objects} will be included in a plan (as we do), their approach predicts the probability that \emph{ground actions} will be included.
We include two versions of this approach as baselines in our experiments, including the implementation provided by the authors.
\section{Problem Setup}
\label{sec:setup}

We now give background and describe our problem setup.

A \emph{property} is a real-valued function on a tuple of \emph{objects}. For example, in the expression \texttt{pose(cup3) = 5.7}, the property is \texttt{pose} and the tuple of objects is $\langle \texttt{cup3} \rangle$. Predicates, e.g., \texttt{on} in the expression \texttt{on(cup3, table) = True}, are a special case of properties where the output is binary. For simplicity, we assume properties have arity at most 2; higher-order ones can often be converted to an equivalent set of binary (arity 2) properties~\cite{rivlin2020generalized}. We treat object types as unary (arity 1) properties.

A \emph{planning problem} is a tuple $\Pi = \langle \P, \A, T, \O, I, G \rangle$, where $\P$ is a finite set of properties, $\A$ is a finite set of object-parameterized actions, $T$ is a (possibly stochastic) transition model, $\O$ is a finite set of objects, $I$ is the initial state, and $G$ is the goal. A state is an assignment of values to all possible applications of properties in $\P$ with objects in $\O$.
A goal is an assignment of values to any subset of the ground properties, which implicitly represents a set of states.
We use $\S$ to denote the set of possible states and $\G$ to denote the set of possible goals over $\P$.
A ground action results from applying an object-parameterized action in $\A$ to a tuple of objects in $\O$; for example, \texttt{pick(?x)} is an object-parameterized action and \texttt{pick(cup3)} is a ground action. The transition model $T$ defines the dynamics of the environment; it maps a state, ground action, and next state to a probability.

We focus on planning problems with extraneous objects: ones that, if ignored, would make planning easier. The methods we propose are biased toward this subclass of planning problems and would not offer benefits in problems for which planning is easier, or only feasible, with all objects.

We consider the usual learning setting where we are first given a set of \emph{training problems}, and then a separate set of \emph{test problems}. All problems share $\P$, $\A$, and $T$, but may have different $\O$, $I$, and $G$. In general, the test problems will have a much larger set of objects $\O$ than the training problems.

We are also given a black-box \emph{planner}, denoted \plan{}, which given a planning problem $\Pi$ as described above, produces either (1) a plan (a sequence of ground actions) if $T$ is deterministic; or (2) a policy (a mapping from states to ground actions) if $T$ is stochastic. 
A plan is a \emph{solution} to $\Pi$ if following the actions from the initial state reaches a goal state.
A policy is a solution to $\Pi$ if executing the policy from the initial state reaches a goal state within some time horizon, with probability above some threshold; in practice, this can be approximated by sampling trajectories. 
Going forward, we will not continue to make this distinction between plans and policies; in either case, at an intuitive level, \plan{} produces ground actions that drive the agent toward its goal.

Our objective in this work is to maximize the number of test problems solved within some time budget.
Because the test problems contain many objects, and planners are often highly sensitive to this number, we will follow the broad approach of learning a model (on the training problems) that speeds up planning (on the test problems).

\section{Planning with Object Importance}
\label{sec:approach}

\begin{algorithm}[t]
  \SetAlgoNoEnd
  \DontPrintSemicolon
  \SetKwFunction{algo}{algo}\SetKwFunction{proc}{proc}
  \SetKwProg{myalg}{}{}{}
  \SetKwProg{myproc}{Subroutine}{}{}
  \SetKw{Continue}{continue}
  \SetKw{Break}{break}
  \SetKw{Return}{return}
  \myalg{\textsc{Planning with Learned Object Importance}}{
    \nonl \textbf{Input:} Planning problem $\Pi = \langle \P, \A, T, \O, I, G \rangle$. 
    \tcp*{\footnotesize See \secref{sec:setup}}
    \nonl \textbf{Input:} Object scorer $f$.
    \tcp*{\footnotesize See \secref{sec:approach}}
    \nonl \textbf{Hyperparameter:} Geometric threshold $\gamma$.\;
    \tcp{\footnotesize Step 1: compute importance scores}
    \nonl Compute $\mathrm{score}(o)$ = $f(o, I, G)\ \ \forall o \in \O$\;
    \tcp{\footnotesize Step 2: incremental planning}
    \nonl \For{$N = 1, 2, 3, ...$}
    {
    \tcp{\footnotesize Select objects above threshold}
    \nonl $\hat{\O} \gets \{ o : o \in \O, \mathrm{score}(o) \geq \gamma^N \}$\;
    \tcp{\footnotesize Create reduced problem \& plan}
    \nonl $\hat{\Pi} \gets \textsc{ReduceProblem}(\Pi, \hat{\O})$\;
    \nonl $\pi \gets$ \textsc{Plan}($\hat{\Pi}$)\;
    \tcp{\footnotesize Validate on original problem}
    \nonl \If{\textsc{IsSolution}($\pi, \Pi$) or $\hat{\O}=\O$}
    {
    \Return $\pi$
    }
    }
    }\;
\caption{\small{Pseudocode for \ploi{}. In practice, we perform two optimizations: (1) plan only when the object set $\hat{\O}$ changes, so that \plan{} is called at most $|\O|$ times; and (2) assign a score of 1 to all objects named in the goal. See \secref{sec:approach} for details and \figref{fig:ploi} for an example.}
}
\label{alg:ploi}
\end{algorithm}

In this section, we describe our approach for learning to plan efficiently in large problems.
Our main idea is to learn a model that predicts a sufficient subset of the full object set.
At test time, we use the learned model to construct a \emph{reduction} of the planning problem, plan in the reduction, and validate the resulting plan in the original problem.
To guarantee completeness, we repeat this procedure, incrementally growing the subset until a solution is found.
This overall method --- {\bf P}lanning with {\bf L}earned {\bf O}bject {\bf I}mportance (\ploi{}) --- is summarized in \algref{alg:ploi} and \figref{fig:ploi}.

\begin{figure}[t]
  \centering
    \noindent
    \includegraphics[width=\columnwidth]{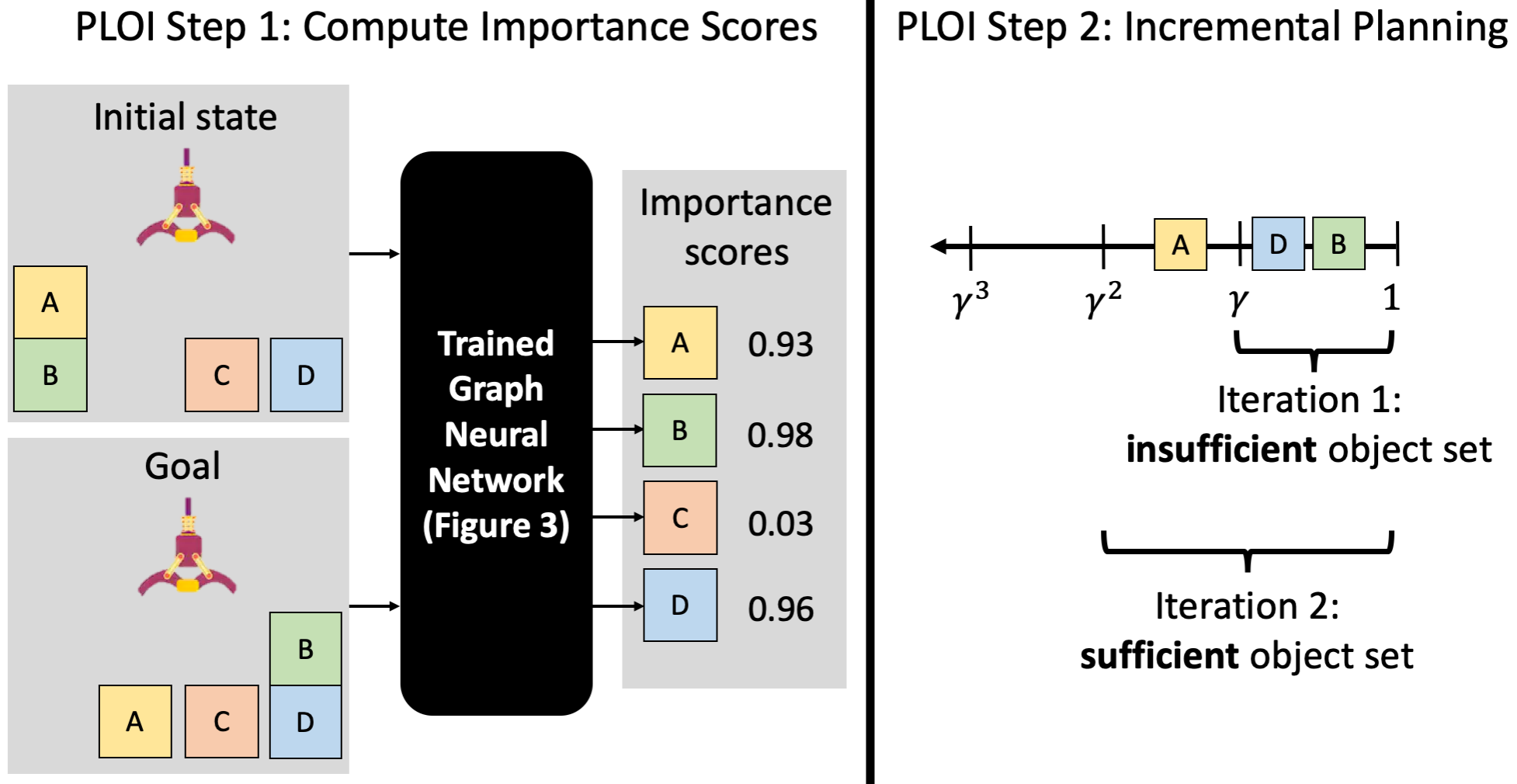}
    \caption{Overview of our method, \ploi{}, with an example. \emph{Left:} To solve this problem, the robot must move block A to the free space, then stack B onto D. The GNN computes the per-object importance score. Block C is irrelevant, and therefore it receives a low score of 0.03. \emph{Right:} We perform incremental planning. In this example, $\gamma=0.95$, so that $\gamma^2 \approx 0.9$. The first iteration tries planning with the object set \{B, D\}, which fails because it does not consider the obstructing A on top of B. The second iteration succeeds, because the object set \{A, B, D\} is sufficient for this problem.}
  \label{fig:ploi}
\end{figure}

We now describe \ploi{} in more detail, beginning with a more formal description of the reduced planning problem.
\begin{defn}[Object set reduction]
\label{def:reduction}
Given a planning problem $\Pi = \langle \P, \A, T, \O, I, G \rangle$ and a subset of objects $\hat{\O} \subseteq \O$, the \emph{problem reduction} $\hat{\Pi} = \textsc{ReduceProblem}(\Pi, \hat{\O})$ is given by $\hat{\Pi} = \langle \P, \A, T, \hat{\O}, \hat{I}, \hat{G} \rangle$, where $\hat{I}$ (resp. $\hat{G}$) is $I$ (resp. $G$) but with only properties over $\hat{\O}$.
\end{defn}
Intuitively, an object set reduction abstracts away all aspects of the initial state and goal pertaining to the excluded objects, and disallows any ground actions that involve these objects.
This can result in a dramatically simplified planning problem, but may also result in an oversimplification to the point where planning in the reduction results in an invalid solution, or no solution at all.
To distinguish such sets from the useful ones we seek, we use the following definition.
\begin{defn}[Sufficient object set]
\label{def:sufficiency}
Given a planning problem $\Pi = \langle \P, \A, T, \O, I, G \rangle$ and planner $\textsc{Plan}$, a subset of objects $\hat{\O} \subseteq \O$ is \emph{sufficient} if $\pi = \textsc{Plan}(\hat{\Pi})$ is a solution to $\Pi$, where $\hat{\Pi} = \textsc{ReduceProblem}(\Pi, \hat{\O})$.
\end{defn}
In words, an object set is \emph{sufficient} if planning in the corresponding reduction results in a valid solution for the original problem.
An object set that omits crucial objects, like a key needed to unlock a door or an obstacle that must be avoided, will not be sufficient: planning will fail without the key, and validation will fail without the obstacle.
Trivially, the complete set of objects $\O$ is always sufficient if the planning problem is satisfiable and the planner is complete.
However, we would like to identify a \emph{small} sufficient set that permits faster planning.
We therefore aim to learn a model that predicts such a set for a given initial state and goal.

\subsection{Scoring Object Importance Individually}
We wish to learn a model that allows us to identify a small sufficient subset of objects given an initial state, goal, and complete set of objects.
There are three basic requirements for such a model.
First, since our ultimate objective is to improve planning time, the model should be fast to query.
Second, since we want to optimize the model from a modest number of training problems, the model should permit data-efficient learning.
Finally, since we want to maintain completeness when the original planner is complete, the model should allow for some recourse when the first subset it predicts does not result in a valid solution.

These requirements preclude models that directly predict a subset of objects.
Such models offer no obvious recourse when the predicted subset turns out to be insufficient.
Moreover, models that reason about sets of objects are, in general, likely to require vast amounts of training data and may require exorbitant time during inference.

We instead choose to learn a model $f: \O \times \S \times \G \to (0, 1]$ that scores objects \emph{individually}. The output of the model $f(o, I, G)$ can be interpreted as the probability that the object $o$ will be included in a small sufficient set for the planning problem $\langle \P, \A, T, \O, I, G \rangle$.
We refer to this output score as the \emph{importance} of an object.
To get a candidate sufficient subset $\hat{\O}$ from such a model, we can simply take all objects with importance score above a threshold $0 < \gamma < 1$.

For the graph neural network architecture we will present in \secref{sec:gnn}, this inference is highly efficient, requiring only a single inference pass.
This parameterization also affords efficient learning, since as discussed at the end of this section, the loss function decomposes as a sum over objects. As an optimization, we always include in $\hat{\O}$ all objects named in the goal, since such objects must be in any sufficient set.

Another immediate advantage of predicting scores for objects individually is that there is natural recourse when the first candidate set $\hat{\O}$ does not succeed: simply lower the threshold $\gamma$ and retry.
In practice, we lower the threshold geometrically (see \algref{alg:ploi}), guaranteeing completeness.

\begin{lem}[\ploi{} is complete]
\label{def:completeness}
Given any object scorer $f : \O \times \S \times \G \to (0, 1]$, if the planner $\textsc{Plan}$ is complete, then \algref{alg:ploi} is complete.
\end{lem}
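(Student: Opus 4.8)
The plan is to show two things: (i) Algorithm~\ref{alg:ploi} halts after finitely many iterations of its \textbf{for} loop, and (ii) whenever it halts, it returns a valid solution to $\Pi$ if one exists (and otherwise reports failure). Since \textsc{Plan} is assumed complete --- i.e., it terminates on every input and returns a solution exactly when one exists --- completeness of \ploi{} then follows directly from (i) and (ii).

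First I would establish a uniform positive lower bound on the importance scores. Because $f$ maps into $(0,1]$ and $\O$ is finite (and, in any nontrivial instance, nonempty), the quantity $m := \min_{o \in \O} f(o, I, G)$ is well defined and strictly positive. Since $0 < \gamma < 1$, the sequence $\gamma^N$ is strictly decreasing with $\gamma^N \to 0$, so there is a least index $N^\star$ with $\gamma^{N^\star} \le m$. For every $N \ge N^\star$ and every $o \in \O$ we then have $\mathrm{score}(o) = f(o, I, G) \ge m \ge \gamma^N$, hence $\hat{\O} = \O$ at iteration $N^\star$; more generally $\hat{\O}$ is nondecreasing in $N$ since the thresholds $\gamma^N$ shrink. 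Consequently the loop cannot run past iteration $N^\star$: the guard ``$\hat{\O} = \O$'' in the \textbf{if} fires and the algorithm returns. Together with the facts that Step~1 issues only $|\O|$ queries to $f$ (each a single terminating inference pass) and that each iteration makes one terminating call to \textsc{Plan} and one to \textsc{IsSolution}, this shows \ploi{} halts in finitely many steps.

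It then remains to verify correctness of the returned object. If \ploi{} exits because $\textsc{IsSolution}(\pi, \Pi)$ is true, then by definition $\pi$ is a valid solution to $\Pi$. Otherwise it exits at the first iteration where $\hat{\O} = \O$; but $\textsc{ReduceProblem}(\Pi, \O) = \Pi$ by \defref{def:reduction}, so there $\pi = \textsc{Plan}(\Pi)$, and completeness of \textsc{Plan} guarantees $\pi$ is a solution to $\Pi$ precisely when $\Pi$ is solvable (and a failure report otherwise). In both cases \ploi{} outputs a solution to $\Pi$ whenever one exists, which is exactly what completeness demands.

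The argument presents no real obstacle; the only points needing care are (a) that the codomain of $f$ is $(0,1]$ rather than $[0,1]$ --- an object assigned score $0$ could never enter $\hat{\O}$, so the ``$\hat{\O} = \O$'' termination condition could never be reached by threshold decay --- and (b) that $\gamma$ is a \emph{strict} geometric factor in $(0,1)$, so the thresholds genuinely shrink to $0$; both are given by the hypotheses. One should also observe that the two optimizations noted in the caption of Algorithm~\ref{alg:ploi} (skipping \textsc{Plan} when $\hat{\O}$ is unchanged, and pinning goal objects to score $1$) merely prune redundant calls and force-include objects that must lie in any sufficient set, and so do not disturb this reasoning.
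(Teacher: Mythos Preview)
Your argument is correct and follows essentially the same route as the paper's own proof: both use that the codomain $(0,1]$ together with finiteness of $\O$ gives a strictly positive minimum score, and that $\gamma^N \to 0$ forces $\hat{\O}=\O$ at some finite iteration, at which point $\textsc{ReduceProblem}(\Pi,\O)=\Pi$ and completeness of \textsc{Plan} takes over. Your version is simply more detailed, explicitly separating termination from correctness and commenting on the two optimizations, whereas the paper compresses the whole thing into three sentences.
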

\begin{proof}
Since the codomain of $f$ excludes 0, there exists an $\epsilon > 0$ s.t. $\{ o : o \in \O, f(o, I, G) \ge \epsilon \} = \O$.
Furthermore, $0 < \gamma < 1$, so there exists an iteration $N$ s.t. $\gamma^N < \epsilon$.
Therefore, in the worst case, we will return $\textsc{Plan}$ on the original problem and return the result.
\end{proof}

In predicting scores for objects individually, we have made the set prediction problem tractable by restricting the hypothesis class, but it is important to note that this restriction makes it impossible to predict certain object subsets.
For example, in planning problems where a particular number of ``copies'' of the same object are required, e.g., three eggs in a recipe or five nails for assembly, individual object scoring can only predict the same score for all copies. 
In practice, we find that this limitation is sharply outweighed by the benefits of efficient learning and inference.

In \secref{sec:gnn}, we will present a graph neural network architecture for the object scorer $f$ that is well-suited for relational domains.
Before that, however, we describe a general methodology for learning $f$ on the set of training problems.

\subsection{Training with Supervised Learning}

We now describe a general method for learning an object scorer $f$ given a set of training problems ${\bf\Pi_{\text{train}}} = \{\Pi_1, \Pi_2, ..., \Pi_M\}$, where each $\Pi_i = \langle \P, \A, T, \O_i, I_i, G_i \rangle$.
The main idea is to cast the problem as supervised learning.
From each training problem $\Pi_i$, we want to extract input-output pairs $\{ ((o, I_i, G_i), y) \}$, where $o \in \O_i$ is each object from the full set for the problem, and $y \in \{0, 1\}$ is a binary label indicating whether $o$ should be predicted for inclusion in the small sufficient set.
The overall training dataset for supervised learning, then, will contain an input-output pair for every object, for each of the $M$ training problems.

The $y$ labels for the objects are \emph{not} given, and moreover, it can be challenging to exactly compute a minimal sufficient object set, even in small problem instances.
We propose a simple approximate method for automatically deriving the labels.
Given a training problem $\Pi_i$, we perform a greedy search over object sets: starting with the full object set $\O_i$, we iteratively remove an individual object from the set, accepting the new set if it is sufficient, until no more individual objects can be removed without violating sufficiency.
All objects in the final sufficient set are labeled with $y=1$, while the remaining objects are labeled with $y=0$.
This procedure, which requires planning several times per problem instance with full or near-full object sets to check sufficiency, takes advantage of the fact that the training problems are much smaller and easier than the test problems.

It should be noted that the aforementioned greedy procedure is an approximation, in the sense that there may be some smaller sufficient object set than the one returned.
To illustrate this point, consider a domain with a certain number of widgets where the only parameterized action is $\texttt{destroy(?widget)}$.
Suppose the goal is to be left with a number of widgets that is divisible by 10, and that the full object set itself has 10 widgets. The greedy procedure will terminate after the first iteration, since no object can be removed while maintaining sufficiency. However, the empty set is actually sufficient because it induces the empty plan, which trivially satisfies this goal.
Despite such possible cases, this greedy procedure for deriving the training data does well in practice to identify small sufficient object sets.

With a dataset for supervised learning in hand, we can proceed in the standard way by defining a loss function and optimizing model parameters.
To permit data-efficient learning, we use a loss function that decomposes over objects: $$\mathcal{L}({\bf\Pi_{\text{train}}}) = \sum_{i=1}^{M} \sum_{o_j \in \O_i} \mathcal{L}_{\text{obj}}(y_{ij}, f(o_j, I_i, G_i)),$$
where $y_{ij}$ is the binary label for the $j^{th}$ object in the $i^{th}$ training problem, and $f(o_j, I_i, G_i) \in (0, 1]$.
We use a weighted binary cross-entropy loss for $\mathcal{L}_{\text{obj}}$, where the weight (10 in experiments) gives higher penalty to false negatives than false positives, to account for class imbalance.

\section{Object Importance Scorers as GNNs}
\label{sec:gnn}

\begin{figure*}[t]
  \centering
    \noindent
    \includegraphics[width=\textwidth]{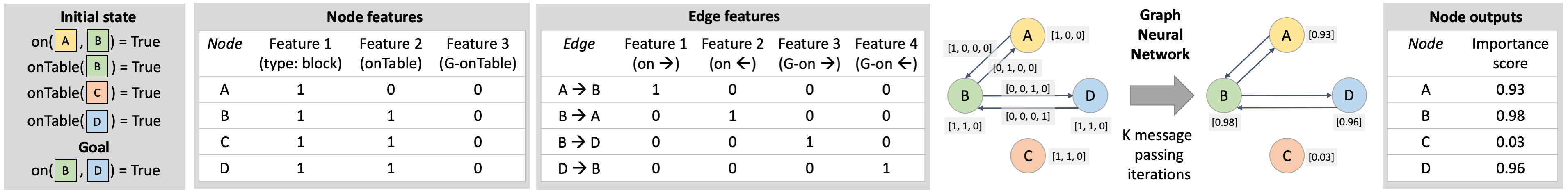}
    \caption{Illustration of object importance scoring with GNNs. 
    \emph{(Left to right)} We consider the same planning problem example as in \figref{fig:ploi}. A node is created for each of the four objects, with features determined by the unary properties in the initial state and goal.
    An edge is created for each ordered pair of objects, with features determined by the binary properties, and with trivial edges excluded.
    These nodes and edges constitute the input graph to a GNN, which performs $K=3$ message passing iterations before outputting another graph with the same topology. Each output node is associated with the object's importance score.
    }
  \label{fig:gnn}
\end{figure*}

We have established individual object importance scorers $f : \O \times \S \times G \to (0, 1]$ as the model that we wish to learn. We now turn to a specific model class that affords gradient-based optimization, data-efficient learning, and generalization to test problems with new and many more objects than were seen during training.
Graph neural networks (GNNs) offer a flexible and general framework for learning functions over graph-structured data \cite{kipf2016semi}.
GNNs employ a relational bias that is well-suited for our setting, where we want to make predictions based on the relations that objects are involved in, but we do not want to overfit to the particular identity or number of objects in the training problems \cite{battaglia2018relational}.
Such a relational bias is crucial for generalizing from training with few objects to testing with many. Furthermore, GNNs can be used in domains with continuous properties, unlike traditional inductive logic programming methods~\cite{ilp1,ilp2,ilp3}. We stress that other modeling choices are possible, such as statistical relational learning methods~\cite{koller2007introduction}, as long as they are lifted, relational, efficiently learnable, and able to handle continuous properties; we have chosen GNNs here because they are convenient and well-supported.

The input to a GNN is a directed graph with nodes $\V$ and edges $\E$.
Each node $v \in \V$ has a feature vector $\phi_{\text{node}}(v) \in \mathbb{R}^{D^{\text{in}}_{\text{node}}}$, where $D^{\text{in}}_{\text{node}}$ is the (common) dimensionality of these node feature vectors.
Each edge $(v_1, v_2) \in \E$ has a feature vector $\phi_{\text{edge}}(v_1, v_2) \in \mathbb{R}^{D^{\text{in}}_{\text{edge}}}$, where $D^{\text{in}}_{\text{edge}}$ is the (common) dimensionality of these edge feature vectors.
The output of a GNN is another graph with the same topology, but the node and edge features are of different dimensionalities: $D^{\text{out}}_{\text{node}}$ and $D^{\text{out}}_{\text{edge}}$ respectively.
Internally, the GNN passes messages for $K$ iterations from edges to sink nodes and from source nodes to edges, where the messages are determined by fully connected networks with weights shared across nodes and edges.
We use the standard Graph Network block~\cite{battaglia2018relational}, but other choices are possible.
Like other neural networks, GNNs can be trained with gradient descent.

We now describe how object importance scoring can be formulated as a GNN.
The high-level idea is to associate each object with a node, each unary property (including object types) with an input node feature, each binary property with an input edge feature, and each importance score with an output node feature.
See \figref{fig:gnn} for an example.

Given a planning problem with object set $\O$, we construct input and output graphs where each node $v \in \V$ corresponds to an object $o \in O$.
In the output graph, there is a single feature for each node; i.e., $D^{\text{out}}_{\text{node}} = 1$. This feature represents the importance score $f(o, I, G)$ of each object $o$.
The edges are ignored in the output graph.

The input graph is an encoding of the initial state $I$ and goal $G$.
Recall that the initial state $I$ is defined by an assignment of all ground properties $(\P$ over $\O)$ to values, and that all properties are unary (arity 1) or binary (arity 2).
Each unary property, which includes object types, corresponds to one dimension of the input node feature vector $\phi_{\text{node}}(o)$.
% For example, if $\texttt{onTable}$ were assigned to the first dimension, and \texttt{onTable(B) = True} in $I$, then $\phi_{vi}(B)$ would be a vector beginning with 1 (for True).
Each binary property corresponds to \emph{two} dimensions of the input edge feature vector $\phi_{\text{edge}}(o_1, o_2)$: one for each of the two orderings of the objects (see Figure \ref{fig:gnn}).

Recall that a goal $G$ is characterized by an assignment of some subset of ground properties to values.
Unlike the initial state, not all ground properties must appear in the goal; in practice, goals are typically very sparse relative to the state.
For each ground property, we must indicate whether it appears in the goal, and if so, with what assignment.
For each unary property, we add two dimensions to the input node feature vector $\phi_{\text{node}}(o)$: one indicating the presence (1) or absence (0) of the property, and the other indicating the value, with a default of 0 if the property is absent.
Similarly, for each binary property, we add four dimensions to the input edge feature vector $\phi_{\text{edge}}(o_1, o_2)$: two for the orderings multiplied by two for presence and assignment.

For STRIPS-like domains where properties are predicates, we make two small simplifications. 
First, to make the graph computations more efficient, we sparsify the edges by removing any edge whose features are all zeros.
Second, in the common case where goals do not involve negation, we note that the presence/absence dimension will be equivalent to the assignment dimension; we thus remove the redundant dimension.
\figref{fig:gnn} makes use of these simplifications.

Given a test problem and trained GNN, we construct an input graph, feed it to the GNN to get an output graph, and read off the predicted importance scores for all objects.
This entire procedure needs only one inference pass (with $K=3$ message passing iterations) to predict all object scores; it takes just a few milliseconds in our experiments.

\begin{table*}
	\centering
	\footnotesize
	\begin{tabular}{| l | p{0.7cm} | p{0.7cm} | p{0.7cm} | p{0.7cm} | p{0.7cm} | p{0.7cm} | p{0.7cm} | p{0.7cm} | p{0.7cm} | p{0.7cm} | p{0.7cm} | p{0.7cm} | p{0.7cm} | p{0.7cm} | }
	\hline
	\multicolumn{1}{|c|}{} &\multicolumn{2}{c|}{Pure Plan} &
	\multicolumn{2}{c|}{\ploi{} (Ours)} &
	\multicolumn{2}{c|}{Rand Score} &
	\multicolumn{2}{c|}{Neighbors} &
	\multicolumn{2}{c|}{Policy} &
	\multicolumn{2}{c|}{ILP AG} &
	\multicolumn{2}{c|}{GNN AG} \\
	\hline
	Domains &
	{\scriptsize Time} & {\scriptsize Fail} &
	{\scriptsize Time} & {\scriptsize Fail} &
	{\scriptsize Time} & {\scriptsize Fail} &
	{\scriptsize Time} & {\scriptsize Fail} &
	{\scriptsize Time} & {\scriptsize Fail} &
	{\scriptsize Time} & {\scriptsize Fail} &
	{\scriptsize Time} & {\scriptsize Fail} \\
	\hline
    Blocks & 7.47 & 0.00 & \bf{0.62} & 0.00 & 49.99 & 0.00 & \bf{0.52} & 0.00 & 7.25 & 0.74 & 2.33 & 0.00 & 52.95 & 0.23 \\
	Logistics & \bf{8.55} & 0.00 & \bf{6.44} & 0.00 & 42.05 & 0.00 & \bf{15.40} & 0.00 & -- & 1.00 & -- & 1.00 & 49.31 & 0.81 \\
	Miconic & 87.71 & 0.06 & \bf{21.64} & 0.04 & -- & 1.00 & 93.86 & 0.98 & -- & 1.00 & -- & 1.00 & -- & 1.00 \\
	Ferry & \bf{12.64} & 0.00 & \bf{7.52} & 0.00 & 43.79 & 0.10 & 39.66 & 0.00 & 34.78 & 0.91 & 33.77 & 0.00 & -- & 1.00 \\
	Gripper & 24.48 & 0.00 & \bf{0.47} & 0.00 & 56.58 & 0.29 & 37.63 & 0.00 & 28.94 & 0.60 & 5.71 & 0.20 & 86.29 & 0.95 \\
	Hanoi & \bf{3.19} & 0.00 & \bf{3.39} & 0.00 & \bf{3.47} & 0.00 & 4.63 & 0.00 & -- & 1.00 & 6.15 & 0.00 & 7.55 & 0.00 \\
	Exploding & 11.52 & 0.30 & \bf{0.81} & 0.30 & 44.96 & 0.32 & 1.08 & 0.29 & 10.18 & 0.89 & 4.69 & 0.19 & 48.53 & 0.38 \\
	Tireworld & 24.58 & 0.01 & \bf{4.38} & 0.08 & 44.09 & 0.29 & 47.13 & 0.00 & 30.36 & 0.10 & -- & 1.00 & 63.03 & 0.66 \\
	PyBullet & -- & 1.00 & \bf{2.05} & 0.00 & -- & 1.00 & 8.58 & 0.01 & -- & -- & -- & -- & -- & -- \\\hline
	\end{tabular}
	 \caption{On test problems, failure rates within a 120-second timeout and planning times in seconds over successful runs. All numbers report a mean across 10 random seeds, which randomizes both GNN training (if applicable) and testing. All times are in seconds; bolded times are within two standard deviations of best. See \tabref{tab:stds} in \appref{app:otherexps} for all standard deviations. AG = action grounding.  Policy and AG baselines are not run for  PyBullet because these methods cannot handle continuous actions. Across all domains, \ploi{} is consistently best and usually at least two standard deviations better than all other methods.}
	\label{tab:mainresults}
\end{table*}

\section{Experiments}
In this section, we present empirical results for \ploi{} and several baselines. We find that \ploi{} improves the speed of planning significantly over all these baselines.
See \appref{app:expdetails} for experimental details beyond those given here, and see \appref{app:otherexps} for additional experiments and results.

\subsection{Experimental Setup}
\textbf{Baselines.} We consider several baselines in our experiments, ranging from pure planning to state-of-the-art methods for learning to plan.
All GNN baselines are trained with supervised learning using the set of plans found by an optimal planner on small training problems.
\begin{tightlist}
\item \textbf{Pure planning.} Use the planner \plan{} on the complete test problems, with all the objects.
\item \textbf{Random object scoring.} Use the incremental procedure described in \secref{sec:approach}, but instead of using a trained GNN to score the importance of each object, give each object a uniformly random importance score between 0 and 1. This baseline can be understood as an ablation that removes the GNN from our system.
\item \textbf{Neighbors.} This is a simple heuristic approach that incrementally tries planning with all objects that are connected by at most $L$ steps in the graph of relations to any object named in the goal, for $L = 0, 1, 2, \ldots$. If a plan has not been found even after all objects connected to a goal object have been considered, we fall back to pure planning for completeness.
\item \textbf{Reactive policy.} Inspired by other works that learn reactive, goal-conditioned policies for planning problems~\cite{groshev2017learning,rivlin2020generalized}, we modify our GNN architecture to predict a ground action per timestep. The input remains the same, but the output has two heads: one predicts a probability over actions $\A$, and the other predicts, for every parameter of that action, a probability over objects.
At test time, we compute all valid actions in each state and execute the one with the highest probability under the policy. This baseline does not use \plan{} at test time.
\item \textbf{ILP action grounding.} This baseline is the method presented by \citet{gnad2019learning}, described in \secref{sec:related}, with the best settings they report. We use the implementation provided by the authors for both training and test. We use the SVR model with round robin queue ordering, and incremental grounding with increment 500.
\item \textbf{GNN action grounding.} We also investigate using a GNN in place of the inductive logic programming (ILP) model used by the previous baseline~\cite{gnad2019learning}. To implement this, we modify our GNN architecture to take as input a ground action in addition to the state and goal, and output the probability that this ground action should be considered when planning.
\end{tightlist}

As mentioned in \secref{sec:intro}, we also attempted to compare to a state-of-the-art lifted planner~\cite{correa2020lifted}, using the implementation provided by the authors. However, we found that this planner was unable to solve any of our test problems in any domain, although it was usually able to solve the (much smaller) training problems.

\textbf{Domains.}
We evaluate on 9 domains: 6 classical planning, 2 probabilistic planning, and 1 simulated robotic task and motion planning.
We chose several of the most standard classical and probabilistic domains from the International Planning Competition (IPC) \cite{ipc2008,ipc2014}, but we procedurally generated problems involving many more objects than is typical. In all domains, we train on 40 problem instances and test on 10 much larger ones. For interacting with IPC domains, we use the PDDLGym library~\cite{pddlgym}, version 0.0.2.
We describe each domain in \appref{app:domains}.
Here we report the total numbers of objects and the numbers of objects explicitly named in the goal for test problems in each domain:
\begin{tightlist}
\item \textbf{Tower of Hanoi}. 13-18 objects total (10-15 in goal).
\item \textbf{Blocks}. 100-150 objects total (20-25 in goal).
\item \textbf{Gripper}. 200-400 objects total (20-40 in goal).
\item \textbf{Miconic}. 2200-3200 objects total (100 in goal).
\item \textbf{Ferry}. 250-350 objects total (6 in goal).
\item \textbf{Logistics}. 130-160 objects total (40 in goal).
\item \textbf{Exploding Blocks}. Same as Blocks.
\item \textbf{Triangle Tireworld}. 2601-2809 objects total (1 in goal).
\item \textbf{PyBullet robotic simulation}~\cite{pybullet}. 1003 objects total (2 in goal). See \figref{fig:env} for details.
\end{tightlist}

\begin{figure}[t]
  \centering
    \noindent
    \includegraphics[width=\columnwidth]{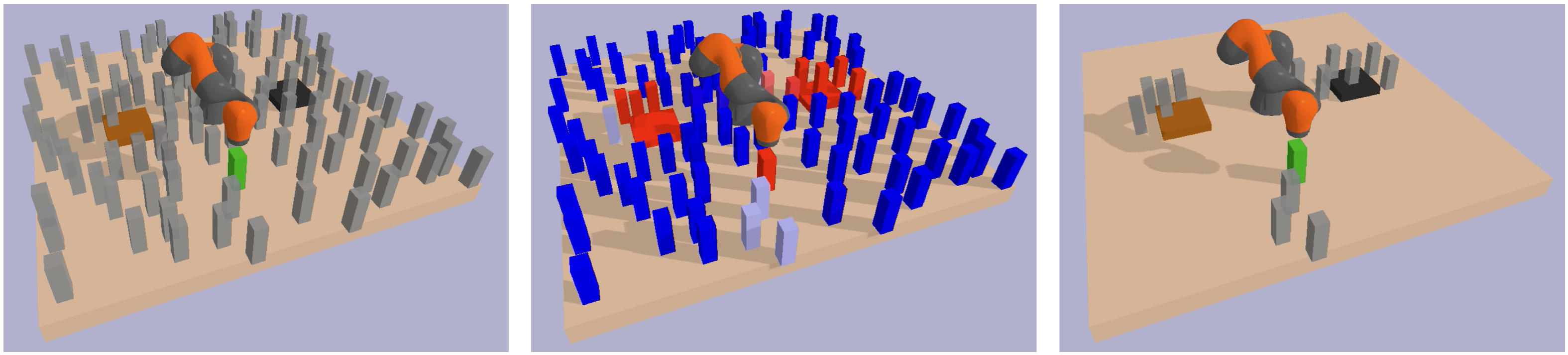}
    \caption{Example problem from the PyBullet domain. \emph{Left:} The robot arm must move the target can (green) to the stove (black) and then to the sink (brown) while avoiding other cans (gray). \emph{Middle:} GNN importance scores for this problem, scaled from blue (low importance) to red (high importance). We can see that cans surrounding the sink, stove, and target have been assigned higher importance score, meaning the GNN has reasoned about geometry. \emph{Right:} The reduced problem in which the robot plans. Only objects with importance score above some threshold remain in the scene.}
  \label{fig:env}
\end{figure}

\subsection{Results and Discussion}

All experiments are conducted over 10 random seeds. \tabref{tab:mainresults} shows failure rates within a 120-second timeout and average planning time on successful runs. Initial experimentation found no significant difference in our results between 300-second and 120-second timeouts. Across all domains, \ploi{} consistently plans much faster than all the other methods. In some domains, such as Gripper, \ploi{} is faster than \textbf{pure planning} by two orders of magnitude. In the case of Hanoi, where all objects are necessary, we see that \ploi{} is comparable to pure planning, which confirms the desirable property that \ploi{} reduces to pure planning with little overhead in problems where all objects are required.

Comparing \ploi{} with the \textbf{random object scoring} baseline, we see that \ploi{} performs much better in all domains other than Hanoi. 
This comparison suggests that the GNN is crucial for the efficient planning that \ploi{} attains. 
To further analyze the impact of the GNN, we plot the number of iterations ($N$ in \algref{alg:ploi}) that are needed until the incremental planning loop finds a solution, for both \ploi{} and random object scoring (\figref{fig:bar}). 
The dramatic difference between the two methods confirms that the GNN has learned a very meaningful bias, allowing a sufficient object set to be consistently found in less than 5 iterations, and often just 1.

\begin{figure}[t]
  \centering
    \noindent
    \includegraphics[width=\columnwidth]{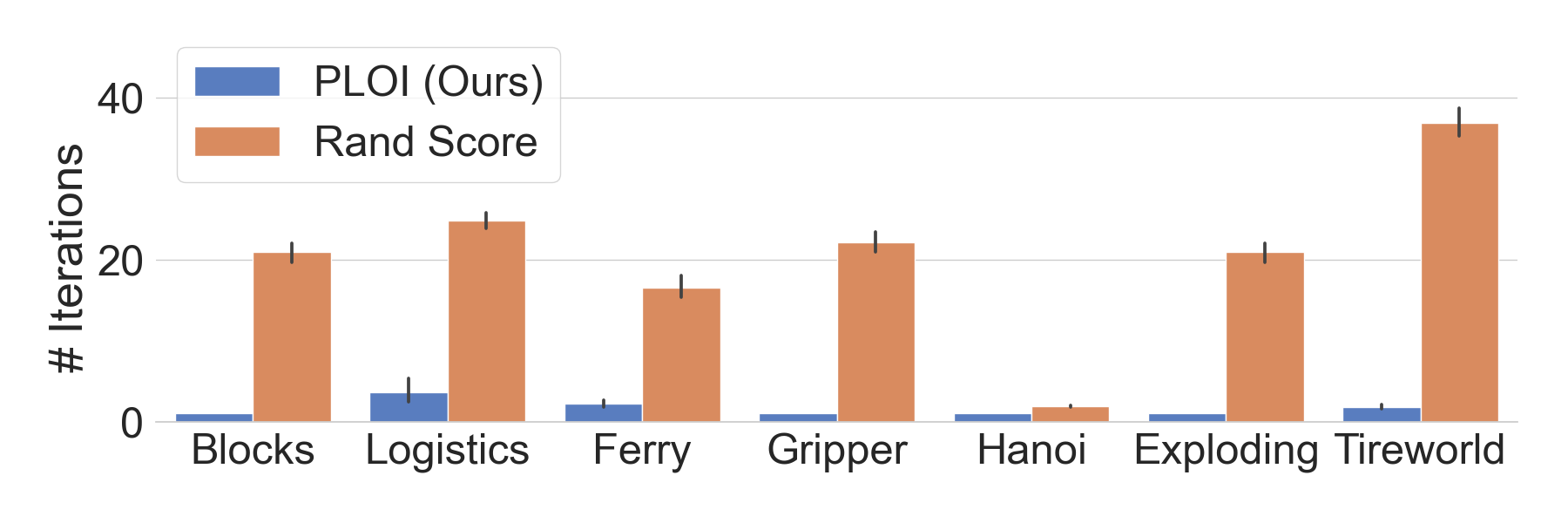}
    \caption{Number of iterations ($N$ in \algref{alg:ploi}) needed until incremental planning finds a solution, for both \ploi{} and random object scoring. Results are averaged over 10 seeds, with standard deviations shown as vertical lines. Miconic and PyBullet are not included because random object scoring never succeeded in this domain. We can see that the GNN has learned a meaningful bias, allowing a sufficient object set to be consistently found in fewer than 5 iterations.}
  \label{fig:bar}
\end{figure}

The key difference between \ploi{} and the \textbf{action grounding (AG)} baselines is that \ploi{} predicts which \emph{objects} would be sufficient for a planning problem, while the AG baselines predict which \emph{ground actions} would be sufficient for a planning problem. 
Empirically, \ploi{} performs better than all the AG baselines, due to the fact that \ploi{} has comparatively little overhead, while the AG baselines spend significant time during inference on trying to score all the possible ground actions, of which there are significantly more than the number of objects.
Another benefit of \ploi{} is that it uses \plan{} as a black box, whereas the AG baselines must modify the internals of \plan{}, e.g. by changing the set of ground actions instantiated during translation or followed during search.

The \textbf{neighbors} baseline performs well in some domains, but not in others; it performs particularly poorly in domains where the agent must consider an object that does not share a relation with some other important one, e.g. a ferry in the Ferry domain. Looking now at the \textbf{policy} baseline, we see that it is generally quite slow. This is because even though the policy baseline does not use \plan{}, it takes time to compute all valid actions and query the policy GNN to find the most probable one on every timestep; by contrast, \ploi{} only performs inference once, on the first timestep.
Moreover, the policy  has a high failure rate relative to the planning baselines, since there is no recourse when it does not succeed.

Finally, the results in the continuous PyBullet domain suggest that \ploi{} is able to yield meaningful improvements over an off-the-shelf task and motion planning system. 
Learning in the hybrid state and action spaces of task and motion planning domains is extremely challenging in general; reactive policy learning is typically unable to make meaningful headway in these domains.
Moreover, it is not possible to apply the action grounding approach due to the infinite number of ground actions (e.g., poses for grasping a can). \ploi{} works well here because it uses a planner in conjunction with making predictions at the level of the (discrete) object set, not the (continuous) ground action space.

\section{Conclusion}

We have introduced \ploi{}, a simple, powerful, and general mechanism for planning in large problem instances containing many objects. Empirically, we showed that \ploi{} performs well across classical planning, probabilistic planning, and robotic task and motion planning. As \ploi{} makes use of a neural learner to inform black-box symbolic planners, we view this work as a step toward the greater goal of integrated neuro-symbolic artificial intelligence~\cite{mao2019neuro,parisotto2016neuro,alshahrani2017neuro}.

An immediate direction for future work would be to investigate the empirical impact of using a GNN as the importance scorer, versus techniques in statistical relational learning~\cite{koller2007introduction,qu2019gmnn}. Another direction would be to study how to apply \ploi{} to open domains, where the agent does not know in advance the set of objects that are in a problem instance. 
Addressing this kind of future direction can help learning-to-plan techniques like \ploi{} fully realize their overarching aim of solving large-scale, real-world planning problems.

\newpage
\section*{Acknowledgements}
We would like to thank Kelsey Allen for valuable comments on an initial draft. We gratefully
acknowledge support from NSF grant 1723381; from AFOSR grant FA9550-17-1-0165; from ONR
grant N00014-18-1-2847; from the Honda Research Institute; from MIT-IBM Watson Lab; and from
SUTD Temasek Laboratories. Rohan and Tom are supported by NSF Graduate Research Fellowships.
Any opinions, findings, and conclusions or recommendations expressed in this material are those of
the authors and do not necessarily reflect the views of our sponsors.
\bibliography{biblio}

\clearpage
\appendix

\begin{table*}[t]
	\centering
	\footnotesize
	\begin{tabular}{| l | p{0.2cm} | p{0.8cm} | p{0.8cm} | p{0.8cm} | p{0.8cm} | p{0.8cm} | p{0.8cm} | p{0.8cm} | p{ 0.8cm} | p{0.8cm} | p{ 0.8cm} | p{0.8cm} | p{0.8cm} | }
	\hline
	\multicolumn{1}{|c|}{} &
	\multicolumn{1}{c|}{} &
	\multicolumn{3}{c|}{\ploi{} (Ours)} &
	\multicolumn{3}{c|}{Policy} &
	\multicolumn{3}{c|}{ILP AG} &
	\multicolumn{3}{c|}{GNN AG} \\
	\hline
	Domains & {\scriptsize \#T } &
	{\scriptsize Data} & {\scriptsize GNN} & {\scriptsize Total} &
	{\scriptsize Data} & {\scriptsize GNN} & {\scriptsize Total} &
	{\scriptsize Data} & {\scriptsize GNN} & {\scriptsize Total} &
	{\scriptsize Data} & {\scriptsize GNN} & {\scriptsize Total} \\
	\hline
    Blocks & 40 & 14.8 & 57 & 647 & 0.555 & 83 & 88.55 & -- & -- & 317.26 & 4.8 & 2990 & 3182  \\
	Logistics & 40 & 40.7 & 115 & 1742 & 12.5 & 230 & 355 & -- & -- & 954.75 & 14.08 & 77300 & 77863 \\
	Miconic & 40 & 39.3 & 568 & 2140 & 0.551 & 2820 & 2825.5 & -- & -- & 4906.5 & 0.52 & 120100 & 120120 \\
	Ferry & 40 & 10.2 & 230 & 637 & 0.445 & 452 & 456.45 & -- & -- & 894.60 & 0.37 & 15800 & 15814 \\
	Gripper & 40 & 33.9 & 130 & 1486 & 0.858 & 300 & 308.58 & -- & -- & 625.34 & 1.14 & 15270 & 15315   \\
	Hanoi & 6 & 1.1 & 53 & 59.63 & 0.34 & 231 & 233.05 & -- & -- &  22.59  & 0.35 & 4750 & 4752.1  \\\hline
	\end{tabular}
	 \caption{Training times for learning methods. For each learning method and for each domain, we report the total training time (``Total'' columns, seconds). For the GNN-based methods, we further report the breakdown between the GNN training time (``GNN'' columns, seconds) and the time required to create the training data \emph{per problem} (``Data'' columns, seconds), with the total number of training problems per domain reported on the left (``\#T'').}
	\label{tab:traintimes}
\end{table*}

\begin{table*}
	\centering
	\footnotesize
	\begin{tabular}{| l | p{0.7cm} | p{0.7cm} | p{0.7cm} | p{0.7cm} | p{0.7cm} | p{0.7cm} | p{0.7cm} | p{0.7cm} | p{0.7cm} | p{0.7cm} | p{0.7cm} | p{0.7cm} | p{0.7cm} | p{0.7cm} | }
	\hline
	\multicolumn{1}{|c|}{} &\multicolumn{2}{c|}{Pure Plan} &
	\multicolumn{2}{c|}{\ploi{} (Ours)} &
	\multicolumn{2}{c|}{Rand Score} &
	\multicolumn{2}{c|}{Neighbors} &
	\multicolumn{2}{c|}{Policy} &
	\multicolumn{2}{c|}{ILP AG} &
	\multicolumn{2}{c|}{GNN AG} \\
	\hline
	Domains &
	{\scriptsize Time} & {\scriptsize Fail} &
	{\scriptsize Time} & {\scriptsize Fail} &
	{\scriptsize Time} & {\scriptsize Fail} &
	{\scriptsize Time} & {\scriptsize Fail} &
	{\scriptsize Time} & {\scriptsize Fail} &
	{\scriptsize Time} & {\scriptsize Fail} &
	{\scriptsize Time} & {\scriptsize Fail} \\
	\hline
	Blocks & 0.07 & 0.00 & 0.07 & 0.00 & 15.80 & 0.00 & 0.06 & 0.00 & 0.77 & 0.35 & 0.04 & 0.00 & 27.68 & 0.32 \\
	Logistics & 0.05 & 0.00 & 5.95 & 0.00 & 2.68 & 0.00 & 0.53 & 0.00 & -- & 0.00 & -- & 0.00 & 14.45 & 0.17 \\
	Miconic & 2.94 & 0.05 & 4.32 & 0.14 & -- & 0.00 & -- & 0.06 & -- & 0.00 & -- & 0.00 & -- & 0.00 \\
	Ferry & 0.03 & 0.00 & 4.75 & 0.00 & 11.05 & 0.06 & 5.34 & 0.00 & -- & 0.28 & 1.82 & 0.00 & -- & 0.00 \\
	Gripper & 0.17 & 0.00 & 0.03 & 0.00 & 15.67 & 0.21 & 5.78 & 0.00 & 2.73 & 0.37 & 0.12 & 0.00 & 22.69 & 0.10 \\
	Hanoi & 0.17 & 0.00 & 0.39 & 0.00 & 0.32 & 0.00 & 0.41 & 0.00 & -- & 0.00 & 0.19 & 0.00 & 0.19 & 0.00 \\
	Exploding & 2.97 & 0.14 & 0.12 & 0.15 & 18.53 & 0.17 & 0.12 & 0.11 & 2.37 & 0.16 & 1.05 & 0.07 & 29.10 & 0.13 \\
	Tireworld & 9.11 & 0.04 & 1.23 & 0.22 & 12.06 & 0.08 & 4.98 & 0.00 & 13.21 & 0.21 & -- & 0.00 & 15.36 & 0.21 \\
	PyBullet & 0.00 & 0.00 & 0.11 & 0.00 & 0.00 & 0.00 & 0.21 & 0.03 & -- & -- & -- & -- & -- & -- \\
	\hline
	\end{tabular}
	\caption{Standard deviations for main results. See Table \ref{tab:mainresults} in the main text for means and experimental details.}
	\label{tab:stds}
\end{table*}

\section{Experimental Details}
\label{app:expdetails}
\emph{Planning details.} We use Fast Downward~\cite{fd} in the \texttt{LAMA-first} mode as the base classical planner for test time in all experiments.
To gather training data with an optimal planner, we use Fast Downward in \texttt{seq-opt-lmcut} mode.
For planning in probabilistic domains, we use single-outcome determinization and replanning \cite{ffreplan}.
For TAMP in the PyBullet experiment, we use PDDLStream~\cite{garrett2020pddlstream} in \texttt{focused} mode.

\emph{Hardware details.} All experiments were performed on Ubuntu 18.04 using four cores of an Intel Xeon Gold 6248 processor, with 10GB RAM per core.

\emph{\ploi{} details.} We use $\gamma=0.9$ for all experiments.

\emph{GNN details.}
GNNs are implemented in PyTorch, version 1.5.0.
All GNNs node and edge modules are fully connected neural networks with one hidden layer of dimension 16, ReLU activations, and layer normalization \cite{layernorm}.
Message passing is performed for $K=3$ iterations.
Training uses the Adam optimizer with learning rate $0.001$ for 1000 epochs. The batch size is 16.
Preliminary experiments with $\ell_2$ regularization, dropout, and hyperparameter search yielded no consistent improvements for any of the methods.

\section{Domain Descriptions}
\label{app:domains}
We evaluate on 9 domains: 6 classical planning, 2 probabilistic planning, and 1 simulated robotic task and motion planning.
The classical and probabilistic domains are from the International Planning Competition (IPC) \cite{ipc2008,ipc2014}.
\begin{itemize}
\item \textbf{Tower of Hanoi}. The classic Tower of Hanoi domain, in which disks must be moved among three pegs. All objects are always necessary to consider in this domain; we have included this domain to show that \ploi{} does not have much overhead on top of pure planning in this situation. We train on problems containing 4-9 disks and test on problems containing 10-15 disks.
The plan lengths for training (test) problems range from 1-63 (511-8191).
\item \textbf{Blocks}. Problems involve blocks in small piles on a table, and the goal is to configure a particular small subset of the blocks into a tower.
We train on problems containing 15-32 blocks and test on problems containing 100-150 blocks.
Test goals involve 20-25 blocks.
The plan lengths for training (test) problems range from 4-10 (26-64).
\item \textbf{Gripper}. Problems involve one robot that can pick and place balls and move to different rooms.
A goal is an assignment of a subset of the balls to rooms.
We train on problems containing 36-52 objects and test on problems containing 100-200 objects.
Test goals involve placing 10-20 balls in random rooms.
The plan lengths for training (test) problems range from 7-19 (27-107).
\item \textbf{Miconic}. Passengers in buildings with elevators are trying to reach particular floors.
We train on problems involving 33-63 objects.
We test on problems with 20-30 floors, 2 passengers per building, and 100 buildings, for a total of over 2000 objects.
Goals involve moving one passenger per building to their desired floor.
The plan lengths for training (test) problems range from 11-12 (894-917).
\item \textbf{Ferry}. A ferry transports cars to various locations. 
We train on problems with 13-21 objects and test on problems with 250-350 objects.
Goals involve moving 3 cars to random locations.
The plan lengths for training (test) problems range from 7-12 (14-17).
\item \textbf{Logistics}. Trucks and airplanes are used to transport crates to cities.
We train on problems with 13-40 objects.
Test problems have around 50 airplanes, 20 cities, 20 trucks, 20-50 locations, and 20 crates. Goals involve moving around 20 crates to random cities.
The plan lengths for training (test) problems range from 5-32 (66-203).
\item \textbf{Exploding Blocks}. A probabilistic IPC domain, where whenever the agent interacts with a block, there is a chance that the block or the table are irreversibly destroyed; no policy can succeed all the time in this domain. Problem sizes are the same as in Blocks.
\item \textbf{Triangle Tireworld}. A probabilistic IPC domain, containing an agent that must navigate through cities, and has a chance of getting a flat tire on each timestep. The agent can only change its tire at certain cities that have spare tires. It is always possible to reach the goal city by simply avoiding cities that do not have spare tires. 
We test on worlds with side length around 50.
\item \textbf{PyBullet robotic simulation}~\cite{pybullet}. In this domain with continuous object properties, a fixed robot arm mounted on the center of a table must interact with a particular can on the table while avoiding all other irrelevant cans. See \figref{fig:env} for details and a visualization. To encode this domain in our GNN, we treat the continuous object poses as node features. Test problems have around 1000 irrelevant cans on the table. The goal always involves manipulating a single can.
\end{itemize}

\section{Additional Experiments}
\label{app:otherexps}

Here we report additional experiments and results.

\subsubsection{Effect of Message Passing Iterations ($K$)}

\begin{figure}[h]
  \centering
    \noindent
    \includegraphics[width=\columnwidth]{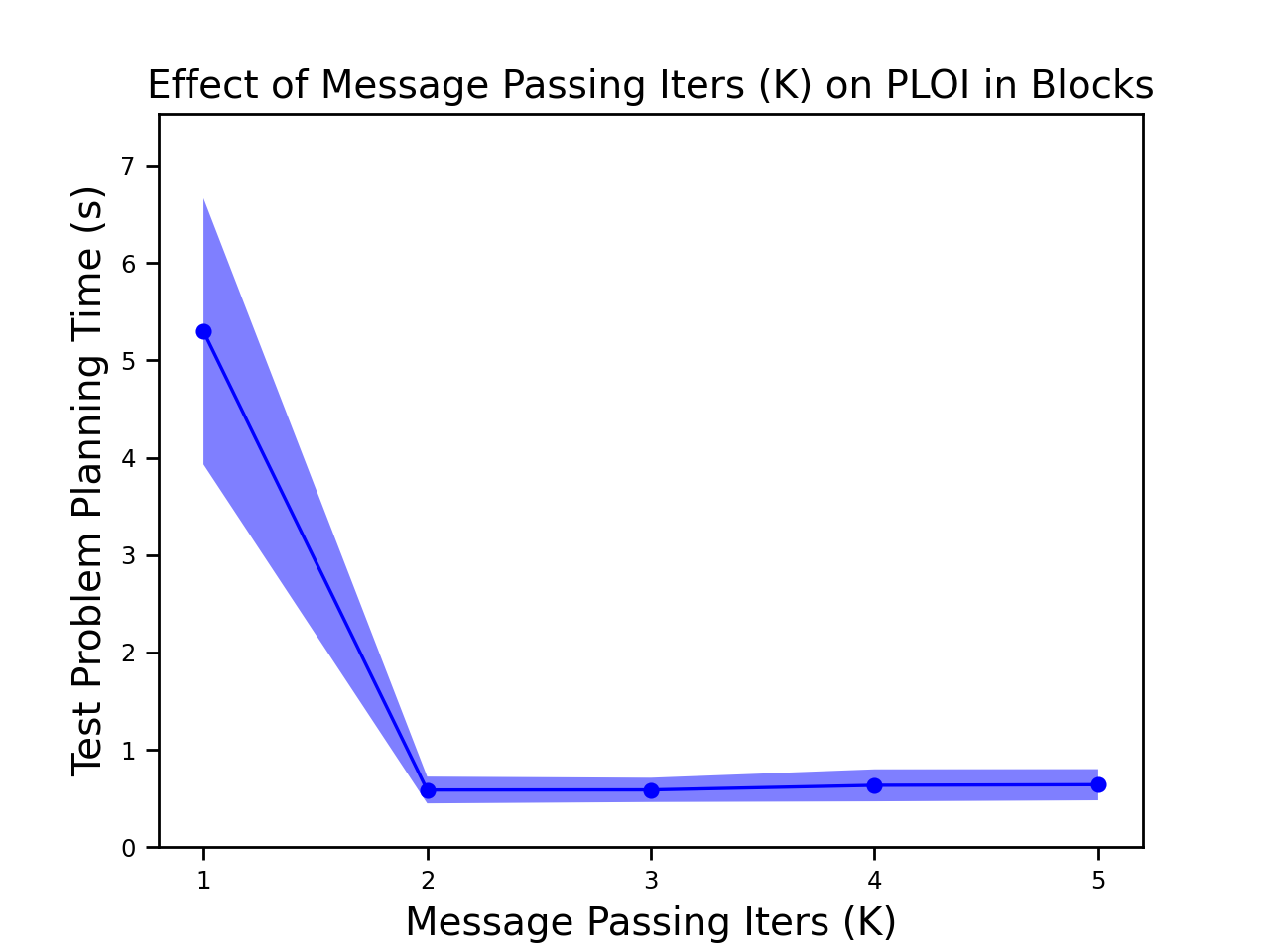}
    \caption{Effect of message passing iterations ($K$) on the performance of \ploi{} in Blocks. Results are averaged over 10 seeds, with standard deviations shown as shaded areas.}
  \label{fig:messages}
\end{figure}

We used $K=3$ message passing iterations for all graph neural networks.
To better understand the impact of this hyperparameter on our main results, we reran \ploi{} on Blocks, varying $K$ from 1 to 5.
As seen in \figref{fig:messages}, results are robust for $2 \le K \le 5$, but performance drops off heavily for $K=1$, suggesting that some propagation through the GNN matters.
In other domains, we would similarly expect $K=1$ to be insufficient, but we may not always expect $K=2$ to suffice. Generally, setting $K$ appropriately involves a trade-off: too low values may prevent the model from fitting the data, while too large values may slow computation and risk overfitting.
A hyperparameter search increasing from $K=1$ should do well to identify an appropriate value for any domain.

\subsubsection{Effect of Number of Training Problems}

\begin{figure}[h]
  \centering
    \noindent
    \includegraphics[width=\columnwidth]{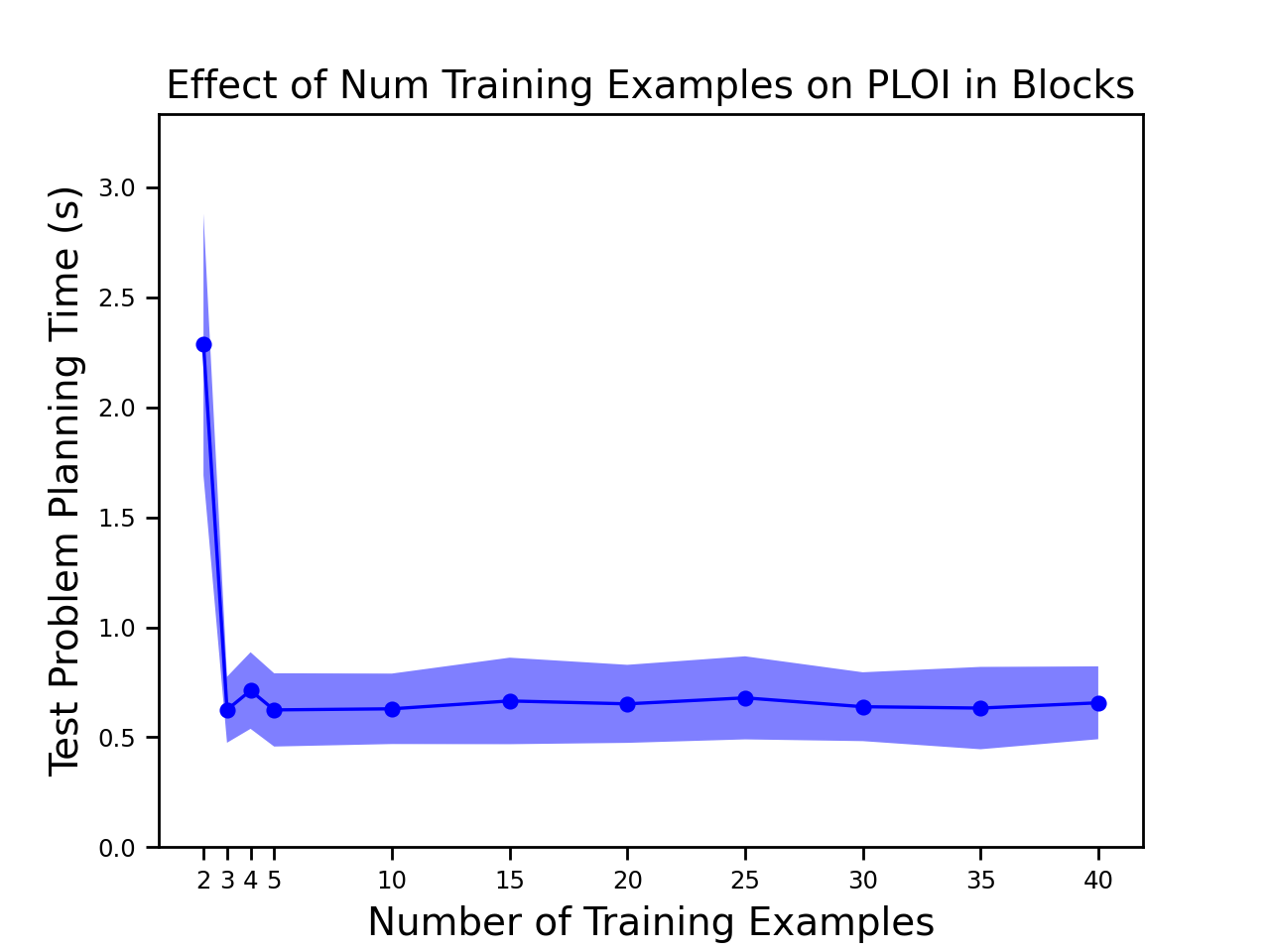}
    \caption{Effect of number of training examples on the performance of \ploi{} in Blocks. Results are averaged over 10 seeds, with standard deviations shown as shaded areas.}
  \label{fig:numexamples}
\end{figure}

We used $<50$ training problems in all domains, with 40 used in Blocks.
To better understand the impact of the number of training problems on our main results, we reran \ploi{} on Blocks, varying the number of training problems between 2 and 40.
As seen in \figref{fig:numexamples}, performance peaks very quickly, starting at 3 and remaining robust for $>3$. We would not necessarily expect so few examples to suffice for the other domains.

\subsubsection{Training Times}

Our main results compare the time required at test time for \ploi{} and baselines to plan.
In Table \ref{tab:traintimes}, we report the time required by \ploi{} and the other learning methods at training time, with a breakdown between training dataset generation and GNN training where applicable.
Our findings are: (1) data generation (predominantly data labelling) takes from 15 seconds (Blocks) to 90 seconds (Tireworld) per problem; (2) training the GNN for \ploi{} is much faster than for Policy or GNN AG, averaging 3 min for \ploi{}, 5 min for Policy, and 500 min for GNN AG; (3) training speed for \ploi{} is on par with that for ILP AG \cite{gnad2019learning}. The difference in (2) is because \ploi{} needs much less data than Policy or GNN AG, since \ploi{} does not operate on actions and is only run on the initial state.

\end{document}